\documentclass[runningheads]{llncs}

\usepackage{times}

\usepackage{ucs}
\usepackage[utf8x]{inputenc}
\usepackage{amsmath}
\usepackage{amsfonts}
\usepackage{amssymb}
\usepackage[american]{babel}
\usepackage{fontenc}
\usepackage{graphicx}\graphicspath{{./figs/}}
\usepackage{url}

\usepackage{mdwlist}
\usepackage{enumerate}

\usepackage{stackrel}
\usepackage{multirow}

\usepackage{tikz}

\newcommand{\setR}{\mathbb{R}}
\providecommand{\inarcs}[1]{\delta^{in}(#1)}
\providecommand{\outarcs}[1]{\delta^{out}(#1)}
\providecommand{\abs}[1]{\lvert #1 \rvert}
\providecommand{\Dunit}{\text{ m}^2\text{s}^{-1}}

\begin{document}

\title{Blinking Molecule Tracking}

\author{Andreas Karrenbauer\inst{1}\thanks{Supported by the Max-Planck-Center for Visual Computing and Communication}
\and Dominik Wöll\inst{2}\thanks{Supported by the Zukunftskolleg, University of Konstanz, Germany}}
\institute{Max-Planck-Institute for Informatics, Saarbrücken, Germany \email{andreas.karrenbauer@mpi-inf.mpg.de} \and Department of Chemistry, University of Konstanz, Germany \email{dominik.woell@uni-konstanz.de}}

\maketitle

\begin{abstract}
We discuss a method for tracking individual molecules which globally optimizes the likelihood of the connections between molecule positions fast and with high reliability even for high spot densities and blinking molecules. Our method works with cost functions which can be freely chosen to combine costs for distances between spots in space and time and which can account for the reliability of positioning a molecule. To this end, we describe a top-down polyhedral approach to the problem of tracking many individual molecules. This immediately yields an effective implementation using standard linear programming solvers.
Our method can be applied to 2D and 3D tracking.
\end{abstract}

\section{Introduction}

The possibility to observe single fluorescent molecules in real-time
has opened up a lot of new insights into the dynamics of systems in biology and
material sciences. Single molecule microscopy (SMM) allows for the parallel
observation of translational and rotational motion of many single fluorescent
molecules beyond the diffraction limit provided that their concentration is reasonably low.
However, tracking of
single fluorescent molecules bears the challenge that the fluorescent spots are
rather weak with a low signal/noise-ratio and show significant changes in signal
intensity \cite{Cichos_2007_15,Lippitz_2005_16,Bingemann_2006_12}.
In the extreme case, a fluorescent molecule is dark for several recorded frames,
a phenomenon which is termed blinking \cite{Goehde_1998_29,Yip_1998_30}.

The fact that the fluorescence signals of single molecules cannot be classified
according to their intensity or shape in different frames and even disappear in
some frames causes severe problems for single molecule tracking. Thus, many
tracking algorithms which have been developed for single particle tracking (e.g.~for cells)
in video microscopy fail for tracking single molecules.

The path from the recorded single molecule microscopy movies to the results about
their motion includes typically the following steps:
\begin{enumerate}[(i)]
\item determination of the positions of each fluorescent molecule,
\item connecting the positions to single molecule tracks, and
\item statistical analysis of these tracks.
\end{enumerate}

\subsection{Previous Work}

Nowadays, single molecule positions
are most often determined using center of mass or Gaussian fits, with the latter
being the best option for low signal-to-noise ratios of around four \cite{Cheezum_2001_1}.
Usually the images are preprocessed by various filters, e.g.~\emph{Mexican hat} \cite{Sage_2005_33}, before the actual localization.

After localization, the positions of subsequent frames
have to be connected to tracks \cite{Schmidt_1996_3}. Different approaches have been
developed for this purpose, but up to now it remains challenging to improve and
develop algorithms not only for special tasks but for a
universal set of problems~\cite{Saxton_2008_10}.

Single Particle Tracking procedures started with the connection of one point
with its closest neighbor in consecutive frames \cite{Ghosh_1994_24}.
In 1999, Chetverikov et al.
published a new algorithm called IPAN Tracker~\cite{Chetverikov_1999_35}.
Using a competitive linking process that develops as the
trajectories grow, this algorithm deals better with incomplete trajectories,
high spot densities, faster moving particles and appearing and disappearing
spots. Sbalzarini et al. used the same approach, but did
not make any assumptions about the smoothness of trajectories \cite{Sbalzarini_2005_34}.
Their algorithm was implemented as ParticleTracker in ImageJ.

The SpotTracker \cite{Sage_2005_33} is a very powerful tool to follow single spots
throughout one movie, but it can only proceed spot by spot. The algorithm proposed by Bonneau et al.~\cite{Bonneau_2005_57} falls in the same category of greedy algorithms that iteratively compute shortest paths in space-time, which are not revised subsequently.

One of the most accurate solutions to single particle tracking is provided by multiple-hypothesis tracking (MHT).
This method chooses the largest non-conflicting ensemble of single particle paths
simultaneously accounting for all position in each frame. Jaqaman et al.~used
such an approach where they first linked positions in consecutive frames by solving bipartite matching problems and
combined these links into entire trajectories~\cite{Jaqaman_2008_2} with a post-processing step to account for missing points in a frame.
Both steps were optimized independently yielding a very likely solution of the
tracking problem. Dynamic multiple-target tracing was used by Sergé et al.~to
generate dynamic maps of tracked molecules at high density~\cite{Serge_2008_48}.
Subtracting detected peaks from the images allows them for a detection of low
intensity peaks which would be otherwise hidden in movies of high particle
density.
Peak positions were connected using statistical information from past
trajectories.

Moreover, manual or semi-automated approaches, which only perform unambiguous choices automatically, are still used though there are cumbersome due to many user interaction at high particle densities.

For the analysis of the tracks, different approaches have been
developed \cite{Elliott_2011_54,Saxton_2008_10}. The most common approach is the
analysis of the mean squared displacement for different time
intervals \cite{Hellriegel_2005_53,Schmidt_1995_50}
which can readily distinguish between different modes of motion such as
normal diffusion, anomalous diffusion, confined diffusion, drift and active
transport~\cite{Saxton_1997_11}.
Alternatively, the empirical distribution of squared
displacements \cite{Schuetz_1997_49} and radii of gyration~\cite{Rudnick_1987_56,Werley_2006_55,Elliott_2011_54}
can be used to analyze single molecule tracks.

In a compagnion paper~\cite{WKSK2013}, we report on the implications of our work from a chemical point of view. Whereas in this paper, we highlight the algorithmic aspects of our approach.

\subsection{Our Contribution}

We present a method for single molecule tracking which globally
optimizes the likelihood of the connections between molecule positions fast and with high
reliability even for high spot densities. Our method uses cost functions which
can be freely chosen to combine costs for distances between spots in space and
time and which can account for the reliability of positioning a molecule.
Using a suitable positioning procedure, reliable tracking can be performed even
for highly mobile, frequently blinking and low intensity fluorescent molecules,
cases for which most other tracking algorithms fail.

In the following, we present a top-down approach for modeling molecule tracking. 
We thereby unify the previous approaches in one framework. By developing a suitable polyhedral model in Sec.~\ref{sec:model}, we show theoretically that it remains computationally tractable. A major advantage of our method is that we immediately obtain an effective software solution using standard linear programming software. Moreover, we experimentally evaluate our implementation in Sec.~\ref{sec:experiments}. To this end, we use real-world data and realistic data, i.e.~randomly generated according to a physical model. We qualitatively compare our tracking on the real-world data to tracks obtained by a human expert, whereas we exploit the knowledge about the ground-truth in the realistic data to quantitatively measure the impact of noise and the validity of the parameters that we have chosen. We evaluated our approach for two-dimensional tracking, but the extension to 3D is straight-forward. We provide our software as open source code\footnote{\url{http://arxiv.org/src/1212.5877v2/anc/tracking.m}} for MATLAB using CPLEX as LP-solver.

\section{A Polyhedral Model for Molecule Tracking}\label{sec:model}

It is easy to see that the number of possible trajectories grows exponentially with the number of points. To tackle this \emph{combinatorial explosion} \cite{Bonneau_2005_57}, we consider the a top-down polyhedral approach for a concise representation in this paper. Suppose we are given a set of points $V = \{v_1, \ldots, v_n\}$. Each point has one temporal and $d$ spatial coordinates, say $(t_i,x_i,y_i)$, $i \in V$, with $d=2$ as used in the following for the sake of presentation.
We postulate the following conditions for a track:
\begin{itemize*}
 \item Each point has at most one predecessor.
 \item Each point has at most one successor.
\end{itemize*}
We model the predecessor/successor relation of two points by ordered pairs. To this end, let $V_{t < t_j} := \{ v_i \in V: t_i < t_j \}$ and $A := \{(v_i,v_j): v_i \in V_{t < t_j},\, v_j \in V \}$. We denote the predecessor/successor relation by $f: A \to \{0,1\}$. Moreover, let $g,h: V \to \{0,1\}$ denote missing predecessors and successors, respectively. That is, $g(v) = 1$ iff $v \in V$ does not have a predecessor, and $h(v) = 1$ iff it does not have a successor. Let $\inarcs{v},\outarcs{v} \subseteq A$ denote the sets of possible predecessor/successor relations for point $v$.

\begin{definition}
A \textbf{track partition} or \textbf{tracking} of $V$ is a collection of disjoint tracks covering $V$, i.e.~each point appears in exactly one track, which might consist of a single point. 
\end{definition}

The characteristic vector $(f,g,h)$ of a track partition is a $\{0,1\}$-vector in which the first $\abs{A}$ entries corresponding to $f$ denote the predecessors/successor relation, the following $\abs{V}$ entries corresponding to $g$ determine the starting points of the tracks, and the last $\abs{V}$ entries corresponding to $h$ define the endpoints of the tracks.

\begin{theorem}\label{thm:polytope}
The \textbf{tracking polytope}, i.e.~the convex hull of all track partitions, is given by
\[
P := \{ (f,g,h) \in \setR_{\ge 0}^{\abs{A}+2\abs{V}} : \forall v \in V:  g(v) + f(\inarcs{v}) = 1, h(v) + f(\outarcs{v}) = 1\}.
\]
\end{theorem}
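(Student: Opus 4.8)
The plan is to prove the set equality between $P$ and the convex hull $Q$ of all track-partition characteristic vectors by the standard two-step route for integral polyhedra: first show $Q \subseteq P$ by checking that every characteristic vector satisfies the defining equations, and then show $P \subseteq Q$ by proving that $P$ is an integral polytope whose integral points are exactly the track partitions. Combining the two inclusions with the fact that a polytope is the convex hull of its vertices then yields the claim.

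The inclusion $Q \subseteq P$ is routine. Given a track partition, consider any point $v$. If $v$ is the first point of its track, then $g(v) = 1$ and no arc of $\inarcs{v}$ is used; otherwise $g(v) = 0$ and precisely the single arc from its predecessor is used. Either way $g(v) + f(\inarcs{v}) = 1$, and the symmetric argument with $h$ and $\outarcs{v}$ handles the successor constraint. Nonnegativity is immediate, so each characteristic vector lies in $P$; since the defining relations are linear, they persist on the convex hull and $Q \subseteq P$ follows.

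For the reverse inclusion I would argue that the constraint matrix $M$ of this linear system is totally unimodular. The crucial observation is that $M$ carries a bipartite incidence structure: split the $2\abs{V}$ equality rows into the $\abs{V}$ "in-rows" (one per vertex $v$, collecting $\inarcs{v}$) and the $\abs{V}$ "out-rows". Each column $f(a)$ with $a = (v_i,v_j)$ has exactly two nonzero entries, a $1$ in the out-row of $v_i$ and a $1$ in the in-row of $v_j$; thus the arc columns form the incidence matrix of a bipartite graph whose two sides are the in-rows and the out-rows. The remaining columns $g(v)$ and $h(v)$ are unit columns, each with a single $1$ (in the in-row, resp.\ out-row, of $v$). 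Since the incidence matrix of a bipartite graph is totally unimodular and appending unit columns preserves total unimodularity, $M$ is totally unimodular. Because the right-hand side is the integral all-ones vector and $P$ is bounded (every coordinate lies in $[0,1]$ by the equalities together with nonnegativity), every vertex of $P$ is integral, in fact a $\{0,1\}$-vector.

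It then remains to identify the integral points of $P$ as track partitions. Given a $\{0,1\}$-solution $(f,g,h)$, each in-constraint forces at most one used arc into $v$ (and sets $g(v)=1$ exactly when there is none), and each out-constraint forces at most one used arc out of $v$; hence the arc set $\{a : f(a) = 1\}$ induces a subgraph in which every vertex has in-degree and out-degree at most one. Such a subgraph decomposes into vertex-disjoint directed paths and cycles, but cycles are impossible here because every arc of $A$ strictly increases the time coordinate, so $A$ is acyclic. The used arcs therefore form a disjoint collection of time-ordered paths covering all of $V$, with $g$ and $h$ marking the endpoints — precisely a track partition. The step to get right is the verification of the bipartite incidence structure, namely that each arc variable contributes exactly one $1$ to an in-row and one $1$ to an out-row; once that is established the total unimodularity and hence integrality are immediate, and the acyclicity of $A$ is exactly what rules out cycles and upgrades "paths and cycles" to genuine tracks.
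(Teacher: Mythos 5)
Your proposal is correct and follows essentially the same route as the paper: both prove total unimodularity of the constraint matrix by recognizing the arc columns as the incidence matrix of a bipartite graph (your in-rows/out-rows are exactly the paper's two vertex copies $V_1$ and $V_2$) with the $g,h$ columns forming an appended identity block. Your write-up is in fact somewhat more careful than the paper's, since you explicitly verify boundedness and use the acyclicity of $A$ (arcs strictly increase time) to rule out cycles when identifying $\{0,1\}$-points with track partitions, steps the paper leaves implicit.
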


\begin{proof}
It is easy to see that the characteristic vector of a tracking is contained in $P$. Moreover, each $\{0,1\}$-vector in $P$ corresponds to a tracking. Hence, it remains to show that these are the only vertices of $P$. To this end, we prove that the constraint matrix $M$ that defines $P$ in the form $P = \{ x : M x = 1, x \ge 0 \}$ is totally unimodular. First, we observe that $M = (M' I)$ where $M'$ corresponds to the $f$-variables. Hence, it suffices to show total unimodularity for $M'$. To this end, we consider an auxiliary graph $G'=(V_1 \dot{\cup} V_2, E')$ at which each of $V_{1,2}$ contains a copy of each point in $V$ and $E'$ is the set of edges that mimics the set $A$ on $V_1 \times V_2$. Note that by this definition $G'$ is bipartite. Moreover, its adjacency matrix is given by $M'$. Hence, $M'$ and $M$ are totally unimodular.
\end{proof}

\subsection{Optimization}

Based on the compact representation of all possible tracks as described before, we now consider the problem of selecting an appropriate tracking out of all these possibilities. To this end, we leverage the fundamental paradigm of normal diffusion: Tracks are Markov chains, i.e.~the transition probability from one state to another does only depend on the current state and not on the history that led to it. Thus, all transitions are independent random events.

Suppose that we are given probabilities $p_1: A \to [0,1]$ for the transitions and $p_{2,3}: V \to [0,1]$ denoting the probability that a point is the beginning or the end of a track, respectively.
We wish to find a tracking with maximum likelihood, i.e.~a tracking that maximizes the joint probability of the independent random events
\[
\mathcal{L}(f,g,h) = \prod_{\stackrel{a \in A}{f(a)=1}} p_1(a) \prod_{\stackrel{v\in V}{g(v)=1}}p_2(v) \prod_{\stackrel{v\in V}{h(v)=1}}p_3(v)
\]
or, equivalently, 
\[
\log \mathcal{L}(f,g,h) = \sum_{\stackrel{a \in A}{f(a)=1}} \log p_1(a) + \sum_{\stackrel{v\in V}{g(v)=1}} \log p_2(v) + \sum_{\stackrel{v\in V}{h(v)=1}} \log p_3(v).
\]
Hence, by substituting $c_i = -\log p_i \ge 0$, finding the most likely tracking amounts to solve the linear programming problem
\begin{equation}\label{eq:LP}
 \min \left\{\sum_{a \in A} c_1(a) f(a) + \sum_{v\in V} c_2(v) g(v) + \sum_{v\in V} c_3(v) h(v) : (f,g,h) \in P \right\},
\end{equation}
where we exploit the consequence of Thm.~\ref{thm:polytope} that the minimum is attained by a $\{0,1\}$-solution for $(f,g,h)$. Put differently, it is not necessary to enforce an integer solution by Integer Linear Programming, which is NP-hard in general, but it is sufficient to solve the LP-relaxation~\eqref{eq:LP}, which can be done in polynomial time using the ellipsoid method~\cite{Khachiyan79} or interior point methods~\cite{Karmarkar84,Ye91}.
Note that the formulation~\eqref{eq:LP} is general enough to capture arbitrary separable likelihood functions $\mathcal{L}(f,g,h)$.

\begin{lemma}\label{lem:radius}
For all optimum solutions $(f,g,h) \in P$ and $a=(v,w) \in A$, we have
\[
 f(a) = 1 \quad \Rightarrow \quad c_1(a) \le c_2(v) + c_3(w).
\]
\end{lemma}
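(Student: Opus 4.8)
The plan is to prove Lemma~\ref{lem:radius} by a local exchange (a ``cut'' argument): if an optimal tracking uses the arc $a=(v,w)$, then breaking this connection and instead terminating one track at $v$ while starting a fresh track at $w$ must not decrease the objective. Because the feasible region $P$ is cut out solely by the two families of equalities $g(v)+f(\inarcs{v})=1$ and $h(v)+f(\outarcs{v})=1$ together with $x\ge 0$, such a cut keeps us inside $P$, and then optimality of the given point yields the inequality directly. No appeal to integrality is needed; the hypothesis $f(a)=1$ already supplies everything.

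First I would read off what $f(a)=1$ forces. Since $a=(v,w)$ leaves $v$ and enters $w$, we have $a\in\outarcs{v}\cap\inarcs{w}$. Substituting $f(a)=1$ into the out-constraint $h(v)+f(\outarcs{v})=1$ at $v$ and into the in-constraint $g(w)+f(\inarcs{w})=1$ at $w$, and using $f\ge 0$, pins down $h(v)=0$ and $g(w)=0$ (and forces every sibling arc in $\outarcs{v}$ and in $\inarcs{w}$ to vanish). This is the only place where nonnegativity enters.

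Next I would define the perturbed vector $(f',g',h')$ that agrees with $(f,g,h)$ except in three coordinates: set $f'(a)=0$, $h'(v)=1$, and $g'(w)=1$. The heart of the argument is checking $(f',g',h')\in P$. The variable $f(a)$ occurs in exactly two of the defining equalities, namely the out-constraint at $v$ and the in-constraint at $w$; lowering $f(a)$ by one while raising $h(v)$ and $g(w)$ by one, respectively, restores both right-hand sides to $1$, and every other constraint is untouched since no other coordinate moved. Together with $f'(a)=0\ge 0$ and $h'(v),g'(w)\ge 0$, this certifies feasibility.

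Finally I would invoke minimality. Since only three coordinates changed, the objective of \eqref{eq:LP} shifts by $-c_1(a)$ plus the terminal cost created at $v$ and at $w$ by the cut; as $(f,g,h)$ minimizes over $P$, this net shift cannot be negative, which is precisely the inequality asserted in Lemma~\ref{lem:radius}. The only delicate points are the feasibility check above and the bookkeeping of which terminal cost attaches to which endpoint; both are routine given that each arc variable sits in exactly two of the equalities and each terminal variable in exactly one, as is visible from the block form $M=(M'\,I)$ of Theorem~\ref{thm:polytope}, so I anticipate no real obstacle beyond this accounting.
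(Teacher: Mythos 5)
Your proof is correct and takes essentially the same route as the paper, whose entire proof is the one-line exchange argument you spell out: cut the arc, pay the two terminal penalties, and invoke optimality, with your feasibility check (that $f(a)$ appears in exactly the out-constraint at $v$ and the in-constraint at $w$, so the repaired point stays in $P$) being exactly the omitted routine verification. One caveat on your final bookkeeping claim: your exchange $h'(v)=1$, $g'(w)=1$ yields $c_1(a)\le c_3(v)+c_2(w)$, not literally the stated $c_2(v)+c_3(w)$ --- the paper's statement and its proof (which writes the infeasible ``$g(v)=h(w)=1$'') carry the same $g/h$ index swap, an apparent typo that your version silently corrects and that is harmless downstream since the paper immediately sets $c_2(v)=c_3(v)=C$ for all $v$.
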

\begin{proof}
By contradiction: We would obtain a better feasible solution by setting $f(a)=0$ and $g(v) = h(w) = 1$.
\end{proof}

\noindent Although it is possible to consider different probabilities for appearing and vanishing particles, we choose a constant one, i.e. let $c_2(v) = c_3(v) = C$ for all $v \in V$. Hence, $c(a) \le 2\cdot C$ for all $a \in A$ which appear in any optimum solution. This inspires the definition of a \textbf{tracking radius} $R$ such that we will only consider predecessors and successors within that range. This dramatically limits the size of an instance and enables us to use space partition techniques to efficiently construct the tracking LP. Put differently, the restriction to a certain tracking radius for efficiency reason is justified by Lem.~\ref{lem:radius}. In the experiments section, we will discuss suitable choices for $R$. Similarly, it makes sense to limit the number of frames that a molecule might be invisible.

\subsection{Dealing with Noise}

Since the points are usually detected from noisy images, there are false positive and false negatives. That is, a spot $v \in V$ is a false positive, if it does not correspond to any track. A false negative is a point that does not have a correspondent in $V$.

Moreover, there might be $v,w \in V$ that correspond to the same point. To deal with these duplicates, we introduce so-called \emph{joins} into $A$. That is, we allow that two points from the same frame appear in one track. We thereby  maintain the integrality of our polyhedron. However, we treat these joins differently w.r.t.~the objective function to reflect the special situation. We propose to set the  cost of such a link to the ordinary cost of that connection plus the mean of the penalties for not having a successor and a predecessor, respectively. Taking the penalty into account is necessary to avoid 2-cycles.

Since we can only deal with points that are present in $V$, we shall avoid false negatives in the point detection. However, a low false negative rate often leads to a high false positive rate. Therefore, we utilize the possibility to consider a quality measure of each detected point. That is, we reduce the cost of not tracking a point according to its quality. This can be modeled easily by multiplying $c_{2,3}(v)$ by some quality factor $q(v)$, e.g.~proportional to the strength of the signal of this spot.

\section{Experiments}\label{sec:experiments}

So far, we described a generic approach for tracking blinking molecules. In this section, we propose our choice for the cost-function, i.e.
\[c(v_i,v_j) = (x_i - x_j)^2 + (y_i - y_j)^2 + (t_i - t_j)^2,\] which is validated experimentally.
The rationale for using this function is based on the following observation:
if time is not penalized, then track fragmentation becomes more likely as shown in Fig.~\ref{fig:costs}.

\begin{figure}[ht]
 \centering
 \includegraphics[height=30mm,page=1]{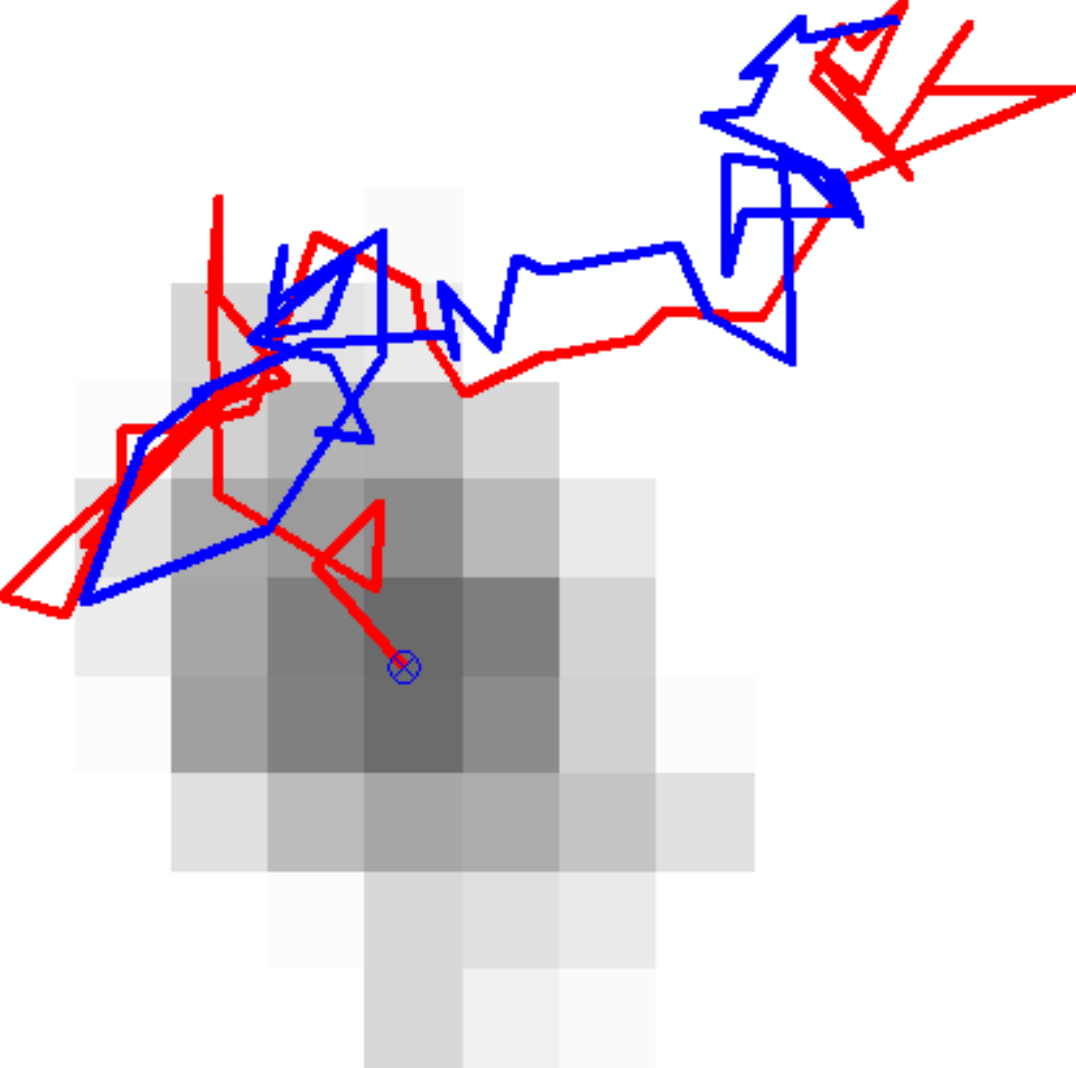}
 \hspace*{20mm}
 \includegraphics[height=30mm,page=2]{figures.pdf}
 \caption{Tracking with $\Delta x^2 + \Delta y^2$ (left) and $\Delta x^2 + \Delta y^2 + \Delta t^2$ (right). Two interleaved tracks are produced for one molecule on the left, whereas there is exactly one on the right. The background images are inverted for better readability.}
 \label{fig:costs}
\end{figure}

Thus, we introduced the superlinear term $\Delta t^2$ such that two time steps
of length 1 are cheaper than one time step of length 2. Hence, the spatial
distance is mainly responsible for comparing positions within the same frame.
We consider closer destinations to be more likely. Therefore, we do not use
the time in this part of the objective.

We evaluated our approach w.r.t.~efficiency and accuracy (the latter is only discussed briefly in this paper to the extent that is relevant for algorithmic conclusions and a more detailed analysis, in particular w.r.t.~chemistry, is presented in~\cite{WKSK2013}).
We first consider a controlled testing environment based on the normal diffusion model mentioned above. We thereby obtain realistic randomly generated instances. Though the experiments with the simulated realistic data has the advantage that we know the ground truth and thus we can quantify the deviation of the computed results in certain situations, we shall also validate our approach on real-world instances. To this end, we compare the diffusion coefficients obtained manually by a human expert with our automated approach in the final subsection of this paper.

\subsection{Realistic Data}

Synthetic trajectories were generated to test our tracking approach. Random walk simulations with 500 steps were performed using four
different diffusion coefficients ($10^{-15}$, $10^{-14}$, $10^{-13}$, and $10^{-12}\Dunit$), five
different signal-to-noise ratios (1, 2, 3, 4, and 5, see Fig.~\ref{fig:SN}) and five different particle
densities (100, 200, 300, 400 and 500 particles per frame). These values can be
found in many tracking challenges of practical importance, but were also chosen
to determine the sensitivity of our algorithms.

\begin{figure}[ht]
 \centering
 \includegraphics[width=0.9 \textwidth,page=3]{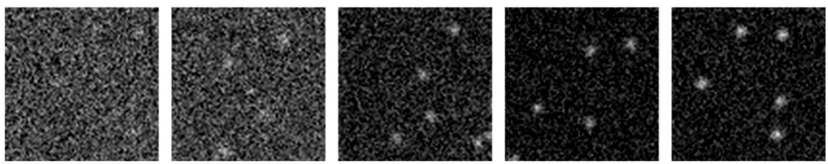}
 \caption{Different signal-to-noise ratios (1,2,3,4, and 5 from left to right).}
 \label{fig:SN}
\end{figure}

The initial x- and y-position of
each spot was chosen uniformly at random in the range [0;500]. The single
molecule trajectories were used to construct movies with 500 frames. Two
dimensional Gaussian functions were constructed with their center located at the
positions obtained from the random walk simulations and their widths being
diffraction limited (ca.~300 nm). We did not explicitly vary the intensities of the spots as it might appear in practice because testing the localization routine is not our primary focus in this paper.\footnote{Note that our approach is modular such that any localization procedure may be used.} However, we implicitly simulated variations in the intensities due to noise and blinking.
The time between consecutive frames was chosen
as 0.1 s and the resolution as 100 nm per pixel. Gaussian white noise was added
to the movies corresponding to the signal-to-noise ratio defined as   with the
amplitude of the center of the 2D Gaussian and the standard deviation of the
Gaussian white noise.

\subsubsection{Evaluation of Running Times}

The running times were measured on a Dell Precision T7500 with an Intel Xenon CPU X5570 at 2.93 GHz and 24 GB RAM.
The memory usage never exceeded 2 GB. In the following, we will discuss the scaling behavior of the running time. That is, the CPU time used for constructing the constraints using a space partition\footnote{We use a regular grid with spacing equal to the tracking radius.} and for solving the LP with CPLEX's barrier interior point method, which turned out to perform best to solve such problems from scratch. In Fig.~\ref{fig:runningtimes}, we present the scaling behavior in dependence on the particle density.

\begin{figure}[ht]
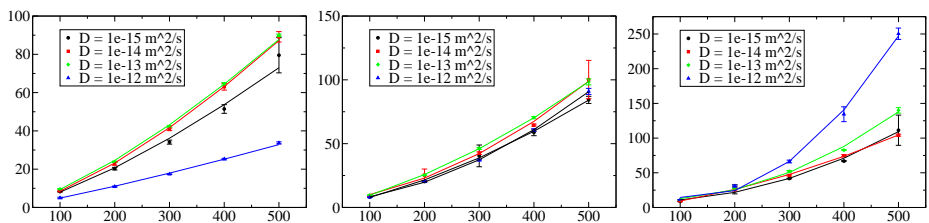

 \centering
 \includegraphics[width=0.325 \textwidth,page=4]{figures.pdf}
 \hfill
 \includegraphics[width=0.325 \textwidth,page=5]{figures.pdf}
 \hfill
 \includegraphics[width=0.325 \textwidth,page=6]{figures.pdf}
 \caption{The running times in seconds depending on the number of particles per frame. The plots correspond to tracking radii 5,10,15 pixels from left to right. The error bars indicate the 95\% confidence intervals. Quadratic polynomials are fitted to the data.}
 \label{fig:runningtimes}
\end{figure}

The naive hypothesis for the running time coming from the theoretical bound of Ye~\cite{Ye91} for log-barrier interior point methods, i.e.~cubic in the number of variables, can be clearly dismissed. This is not surprising since the special sparse structure of our constraint matrix is unlikely to serve as a worst-case example. Instead, we found experimental support for the hypothesis that the running times scale quadratically w.r.t. the number of particles per frame. The rationale for this hypothesis is that the degree of the node of a point in a frame is proportional to the number of particles per frame. Thus, the number of arcs is quadratic in the number of particles per frame. Hence, the construction of the LP model takes quadratic time, which is the dominating part for low densities and slow molecules. However, with high densities and fast molecules, we see a turn-over to the LP-solver: the faster the molecules, the fewer arcs (because the shorter is the time a molecule stays within a fixed circle), but 
on the other hand the more connected the graph becomes because each molecule \emph{sees} more other molecules. We believe that the increasing correlation is responsible for the slow-down of the LP-solver and the higher running times for $D = 10^{-12}\Dunit$. This is supported by the observation that for small tracking radii the computations for fast molecules finish earliest while for the large tracking radius it takes more time than the others (see Fig.~\ref{fig:runningtimes}). Nevertheless, we stress that these running times are negligible
w.r.t.~the time necessary for preparing and executing such an experiment in reality. Thus, concerning computational resources, our approach is well suited for being applied in the lab.

\subsubsection{Determination of tracking accuracy for simulated data}

Tracking procedures have to meet several conditions to be suitable. Apart from practical
aspects such as tracking speed and memory consumption, the number of false positives is the
key factor which has to be minimized in order to obtain reliable results. 
Similarly to the definition of false positive and false negative w.r.t.~particle locations,
false positives in this context are connections between positions in different frames which have been set
even though the positions do not belong to the same molecule/particle. They can result in
severe errors in single molecule tracking and cause wrong interpretations of collected data.
Thus, the number of false positives should be kept as low as possible.
False negatives are connections between positions in different frames which are not
recognized by the tracking algorithm.

We counted the number of false positives and false negatives for movies of different diffusion
coefficients and signal-to-noise-ratios by comparing ground truth and analyzed
connections between points. The fraction of false positive
connections decreases from 13\% to 3\% as the S/N-ratio increases from 1 to 5.
False negatives particularly occur with fast moving molecules if the tracking radius is not chosen carefully.
The reason is that the probability of finding the destination within a radius of R is
\[
 P(R,t) = \int_0^R \frac{r}{2Dt} \exp\left( -\frac{r^2}{4Dt}\right) dr = 1 - \exp \left(-\frac{R^2}{4Dt}\right).
\]
Thus, picking the tracking radius too low yields biased false negatives and hence an underestimation of the diffusion coefficients. However, the tempting choice of an excessive tracking radius does not only require much more computational resources, but may also lead to an overestimation of the diffusion coefficients if there occur leaps in the tracks due to false positives (in particular with high particle densities).

We propose to choose the tracking radius such that a displacement is smaller with a probability of about 99\%. Since those leaps are easily determined in a post-processing steps, a repetition of the tracking with different radii in a feedback loop is feasible. In particular, allowing or disallowing connections between spots can be done efficiently with linear programming since the LP remains primal feasible or  dual feasible, respectively. Thus, in the former case, we shall use the primal simplex method for re-optimization and the dual simplex method in the latter case.

Positioning and tracking are the key steps in the determination of diffusion coefficients. To
distinguish the errors appearing in these steps, we analyzed four different cases shown in
Fig.~\ref{fig:experimental_paths}.
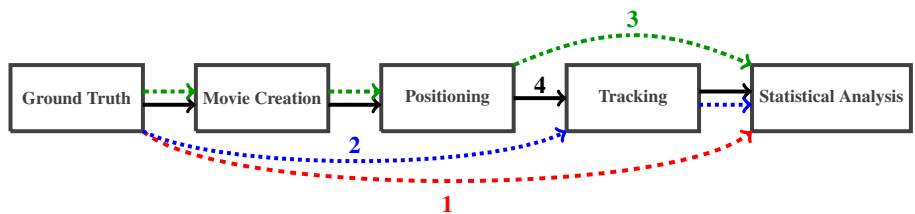
\begin{figure}[ht]
 \centering
\resizebox{\textwidth}{!}{
\definecolor{green}{RGB}{0,150,0}
\begin{tikzpicture}[scale=0.47]
  % frame
  \draw[darkgray,-,line width=2pt]{}   (0,0)
    -- (5,0)  -- (5,2.5) -- (0,2.5)  -- (0,0)
	node (n1) at (2.5,1.25) {\textbf{Ground Truth}};
  \draw[darkgray,-,line width=2pt]{}   (7,0)
    -- (12,0)  -- (12,2.5) -- (7,2.5)  -- (7,0)
	node (n1) at (9.5,1.25) {\textbf{Movie Creation}};
  \draw[darkgray,-,line width=2pt]{}   (14,0)
    -- (19,0)  -- (19,2.5) -- (14,2.5)  -- (14,0)
	node (n1) at (16.5,1.25) {\textbf{Positioning}};
  \draw[darkgray,-,line width=2pt]{}   (21,0)
    -- (26,0)  -- (26,2.5) -- (21,2.5)  -- (21,0)
	node (n1) at (23.5,1.25) {\textbf{Tracking}};
  \draw[darkgray,-,line width=2pt]{}   (28,0)
    -- (34,0)  -- (34,2.5) -- (28,2.5)  -- (28,0)
	node (n1) at (31,1.25) {\textbf{Statistical Analysis}};

  \draw[->, line width = 2pt] (5,1) -- (7,1);
  \draw[->, line width = 2pt] (12,1) -- (14,1);
  \draw[->, line width = 2pt,color = green, dashdotted] (5,1.5) -- (7,1.5);
  \draw[->, line width = 2pt,color = green, dashdotted] (12,1.5) -- (14,1.5);
  \draw[->, line width = 2pt] (19,1.25) -- (21,1.25)
	node (n1) at (20,1.75) {\textbf{\large{4}}};
  \draw[->, line width = 2pt, color = green, dashdotted] (19,2.5) .. controls (22.5,4) and (24.5,4) .. (28,2.5) node (n1) at (23.5,4.25) {\textbf{\large{3}}};
  \draw[->, line width = 2pt, color = red, dashed] (5,0) .. controls (8,-2.5) and (25,-2.5) .. (28,0) node (n1) at (16.5,-2.75) {\textbf{\large{1}}};
  \draw[->, line width = 2pt, color = blue, dotted] (5,0) .. controls (8,-1.5) and (18,-1.5) .. (21,0) node (n1) at (13,-0.5) {\textbf{\large{2}}};
  \draw[->, line width = 2pt] (26,1.5) -- (28,1.5);
  \draw[->, line width = 2pt, color = blue, dotted] (26,1) -- (28,1);

\end{tikzpicture}
}
 \caption{Steps from the simulated ground truth data to the analysis of the trajectories. To analyze the
accuracy of each single step different paths were considered (labeled 1 to 4).}
 \label{fig:experimental_paths}
\end{figure}

In case (1), the distribution of diffusion coefficient was directly calculated from
the ground truth tracks. Though all tracks were created with respect to fixed diffusion coefficients, the calculation yields peaked distributions around the true values because of the finite number of sample points in each track.

In a second set of analysis (case 2) the same ground truth positions were tracked
using our polyhedral model solved with CPLEX. Good results were obtained except for fast molecules and high densities. That is, the
probability of foreign spots moving into the tracking range of another molecule is
too high, and thus the tracking algorithm in general returns diffusion coefficients lower than
the real value.

The third set of analysis, case 3, allows for an investigation of the influence of positioning
inaccuracies on the distribution of diffusion coefficients. Movies were constructed from the
ground truth positions with different S/N-ratios. Our positioning algorithm was applied to
these movies and, where possible, the positions matched to the positions of the ground truth
tracks. With the determined positions of each track, a diffusion coefficient was obtained. The
distribution of diffusion coefficients of these tracks resembles the distributions of the ground
truth tracks with the exception of low diffusion coefficients with low S/N-ratios where the
poor localization accuracy results in a seemingly higher diffusion coefficient than simulated.

Case 4 describes the procedure which is applied for real movies to determine single molecule
diffusion coefficients. Spots in movies are positioned, the positions tracked and a diffusion
coefficient calculated from these tracks. For high S/N-ratios, the obtained distributions are
similar to the simulated distributions. Two trends can be observed in particular at low S/N-
ratios:
\begin{enumerate}[(i)]
 \item molecules with very low diffusion coefficients tend to be analyzed as being faster as they
were simulated, and
\item analysis of the motion of very fast molecules in average results in a lower
diffusion coefficient as the ground truth data.
\end{enumerate}
The former observation can be explained by the
poorer localization accuracy at low S/N-ratios. This inaccuracy resembles diffusion and thus a
low diffusion coefficient will be assigned even to immobile molecules. The localization
accuracy determines the lowest diffusion coefficient which can be determined by the
corresponding experimental settings.

\subsubsection{Tracking of blinking data}

In the previous subsection, points were missing due to false negatives in the localization at low S/N-ratios.
However, even with perfect localization missing points may occur naturally in real-world experiments because
the fluorescence intensity of single molecules is typically not constant, but shows blinking
behavior due to photochemical or photophysical quenching processes \cite{Cichos_2007_15}.
The lengths of on- and off-times typically show a power law distribution \cite{Cichos_2007_15}. In order to simulate
blinking behavior, we generated on- and off-times for our simulated tracks using the
following procedure. At the beginning a molecule was u.a.r. set as on or off. The number of
frames remaining in this state was determined randomly from a probability distribution with $P(t) \sim \left( t/\tau\right)^\alpha$.
We chose realistic values for $\tau$ and $\alpha$, i.e.~$\tau = 1$s and $\alpha = -2$, respectively.

\begin{figure}[ht]
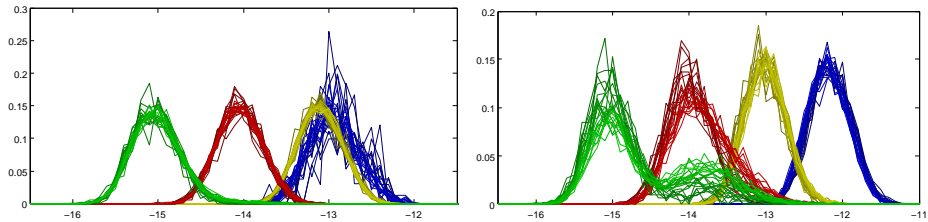

 \centering
 \includegraphics[width= 0.49 \textwidth,page=7]{figures.pdf} \hfill
 \includegraphics[width= 0.49 \textwidth,page=8]{figures.pdf}
 \caption{Distributions of diffusion coefficients obtained after tracking of blinking data for a maximum step
length of 5 (left) and 15 (right). The scale for the x-axis is $\log_{10}( D \cdot \text{m}^{-2}\text{s})$.}
 \label{fig:9}
\end{figure}

The analyzed distributions of diffusion coefficients for tracking of blinking ground truth data
are shown in Fig.~\ref{fig:9}. For a tracking radius of 5 pixels, the distributions of diffusion
coefficients resemble the ground truth data except for fast molecules where a tracking radius of 5 pixels is not sufficient as discussed before.
A tracking radius of 15 pixels yields good results for fast diffusing
molecules, but for slower ones, the distributions have a long tail
or even a second band at higher D values (see Fig.~\ref{fig:9} (right)). The deviation of distribution
from the ground truth distribution is caused by tracks which include at least one large jump
from one molecule to another one which results in a significant increase of the track radius. However, these situations can be recognized easily or even automatically by an outlier detection algorithm. As said before, we propose to integrate such a post-processing in a feedback loop to deal with the such situations especially when heterogeneous ensembles with slow and fast molecules are observed.

\subsection{Real-World Data}

The real-world data were obtained from single molecule fluorescence widefield experiments during the bulk radical polymerization of styrene to polystyrene. The motion of single perylene diimide fluorophores was observed at various monomer-to-polymer-conversions and thus different viscosities which allowed us to probe a broad range of diffusion coefficients. The interested reader is referred to~\cite{Stempfle_2012_59} for more details.

Before this project, the tracks were constructed semi-manually due to the lack of a satisfying alternative. That is, only a simple search in the neighborhood of the points was performed automatically as long as there were no ambiguities, i.e.~only one localized point within the tracking radius and no competition among potential predecessors. In the case when the automatic continuation of the tracks fails, the user was presented with 10 consecutive frames of the movie with the options to select a successor among the alternatives, to introduce a new spot that was not detected by the localization, or to end the track. Needless to say that this was a tedious task, which took several working days to complete the tracking of a 5-minute-movie with high particle density. The advantage of this method is that the human expert maintains the full control over the process and the pattern recognition capabilities of the human brain is leveraged to resolve situations in which the image processing tools fail. On the other
hand, these possibilities are also a disadvantage as the user might introduce systematic errors in the data and it is unlikely that a repetition of the task yields exactly the same results.

It remains to show that our automatic method not only works for realistic data but also in the real-world. To this end, we compare the average diffusion coefficients obtained from 6 movies by manual and automatic tracking:

\begin{center}
\begin{tabular}{|l|rrrrrr|c|}
\hline
manual     &  0.019 &   0.053 &   0.126 &   0.537 & 1.166 &  4.864 & \multirow{2}{*}{$\cdot 10^{-13}\Dunit$} \\
automatic  &  0.023 &   0.054 &   0.132 &   0.509 & 1.054 &  4.372 & \\
\hline
\end{tabular}
\end{center}

\bibliographystyle{ieeetr}
\bibliography{sea2013}

\end{document}